\newtheorem{definition}{Definition}
\newtheorem{theorem}{Theorem}
\newtheorem{lemma}{Lemma}
\newtheorem{corollary}{Corollary}
\newcommand{\pbil}{\text{\sc PBIL}\xspace} 
\newcommand{\eda}{\text{\sc EDA}\xspace} 
\newcommand{\edas}{\text{\sc EDAs}\xspace} 
\newcommand{\cga}{\text{\sc cGA}\xspace} 
\newcommand{\umda}{\text{\sc UMDA}\xspace} 
\newcommand{\om}{\text{\sc OneMax}\xspace} 
\newcommand{\bval}{\text{\sc BinVal}\xspace} 
\newcommand{\los}{\text{\sc LeadingOnes}\xspace} 
\newcommand{\dkw}{\text{\sc DKW}\xspace} 
\newcommand{\Natural}{\mathbb{N}} 
\newcommand{\sspace}{\mathcal{X}}
\providecommand{\keywords}[1]{\textbf{\textit{Index terms---}} #1}
\newcommand{\prob}[1]{\Pr\left(#1\right)}
\newcommand{\expect}[1]{\mathbb{E}\left[#1\right]}
\newcommand{\bigO}[1]{\mathcal{O}\left(#1\right)} 
\newcommand{\bigOmega}[1]{\Omega\left(#1\right)} 
\newcommand{\bigTheta}[1]{\Theta\left(#1\right)} 
\newcommand{\step}[1]{\textbf{Step #1}:} 
\DeclarePairedDelimiter\ceil{\lceil}{\rceil}
\DeclarePairedDelimiter\floor{\lfloor}{\rfloor}
\title{Level-Based Analysis 
of the Population-Based Incremental Learning Algorithm\footnote{Preliminary 
version of this work will appear
in the Proceedings of the 15th 
International Conference on Parallel 
Problem Solving from Nature 2018 (PPSN XV).}
}
\author{Per Kristian Lehre \& Phan Trung Hai Nguyen\\
School of Computer Science\\
University of Birmingham\\
Birmingham B15 2TT, United Kingdom
}
\begin{document}
\maketitle

\begin{abstract}
The Population-Based Incremental Learning (\pbil) algorithm
uses a convex combination of 
the current model and the 
empirical model to construct the next model,
which is then sampled to generate offspring.
The Univariate Marginal Distribution Algorithm (\umda) 
is a special case of the \pbil, where the current model is ignored.
Dang and Lehre (GECCO 2015) 
showed that \umda can optimise 
\los efficiently. The question still remained open 
if the \pbil performs equally well. 
Here, by applying the level-based theorem 
in addition to  Dvoretzky--Kiefer--Wolfowitz inequality,
we show that the \pbil optimises 
function \los in expected time
$\bigO{n\lambda\log \lambda+n^2}$ for a population size 
$\lambda=\Omega(\log n)$,
which matches the bound of the \umda.
Finally, we show that the result 
carries over to \bval, giving the fist runtime
result for the \pbil on the \bval problem.
\end{abstract}

\keywords{Population-based incremental learning, LeadingOnes, BinVal,
Running time analysis, Level-based analysis, Theory}

\section{Introduction}

Estimation of distribution algorithms (\edas) 
are a class of randomised search heuristics
that optimise  objective functions 
by constructing probabilistic models and 
then sample the models to generate offspring 
for the next generation. 
Various variants of 
\eda have been proposed over the last decades; 
they differ from each other 
in the way their models are 
represented, updated as well as sampled over generations. 
In general, \edas are usually categorised
into two main classes: \textit{univariate} 
and \textit{multivariate}. Univariate \edas 
take advantage of first-order statistics (i.e. mean) to 
build a univariate model, whereas
multivariate \edas apply higher-order 
statistics to model the correlations
between the decision variables.

There are only a few runtime results available for \edas.
Recently, there has been a growing 
interest in the optimisation time
of the \umda, introduced by M{\"u}hlenbein and Paa{\ss}
\cite{Muhlenbein1996}, on standard benchmark functions
\cite{Dang:2015:SRA:2739480.2754814,bib:Witt2017,bib:Lehre2017,bib:Krejca,bib:Wu2017}.
Recall that the optimisation time of an algorithm is the number of fitness
evaluations the algorithm needs before a
global optimum is sampled for
the first time.
Dang and Lehre~\cite{Dang:2015:SRA:2739480.2754814} 
analysed a variant 
of the \umda using truncation selection and derived  
the first upper bounds of $\bigO{n\lambda \log\lambda}$ 
and $\bigO{n\lambda\log \lambda + n^2}$
on the expected optimisation times of the \umda on 
\om and \los, respectively,
where  the population size is $\lambda=\Omega(\log n)$.
These results were obtained using a relatively new 
technique called \textit{level-based analysis} \cite{bib:Corus2016}. 
Very recently, Witt~\cite{bib:Witt2017} proved that the 
\umda optimises \om within
$\bigO{\mu n}$ and $\bigO{\mu\sqrt{n}}$ when
$\mu \geq c\log n$ and $\mu \geq c'\sqrt{n}\log n$ for
some constants $c,~c'>0$, respectively. However, these bounds only hold 
when $\lambda = (1+\Theta(1))\mu$. This constraint on $\lambda$ and 
$\mu$ was relaxed by 
Lehre and Nguyen \cite{bib:Lehre2017}, where
the upper bound $\bigO{\lambda n}$ holds 
for $\lambda=\Omega(\mu)$ and $c\log n \leq \mu=\bigO{\sqrt{n}}$
for some constant $c>0$.

The first rigorous runtime 
analysis of the \pbil \cite{Baluja:1994:PIL:865123},
was presented very recently by Wu et al. \cite{bib:Wu2017}. 
In this work, the \pbil was referred to as a cross entropy algorithm.
The study proved an upper bound 
$\bigO{n^{2+\varepsilon}}$ of the \pbil with 
margins $[1/n,1-1/n]$ on \los, where
$\lambda=n^{1+\varepsilon}$, $\mu=\mathcal{O}(n^{\varepsilon/2})$,
$\eta\in \bigOmega{1}$ and $\varepsilon\in (0,1)$.
Until now, the known runtime bounds for the \pbil were significantly
higher than those for the \umda.  Thus, it is of interest to determine
whether the \pbil is less efficient than the \umda, or whether the bounds
derived in the early works were too loose.

This paper makes two contributions. First, we address the 
question above by deriving a tighter 
bound $\bigO{n\lambda\log \lambda+n^2}$
on the expected optimisation time of the \pbil on \los. The bound
holds for population sizes $\lambda=\bigOmega{\log n}$, 
which is a much weaker assumption than 
$\lambda=\omega(n)$ as required in \cite{bib:Wu2017}.
Our proof is more straightforward than that
in \cite{bib:Wu2017} because much of the
complexities of the analysis are already handled by
the level-based method \cite{bib:Corus2016}. 

The second contribution is the first runtime bound of the \pbil on
\bval. This function was shown to be the hardest among all linear
functions for the \cga \cite{bib:Droste2006}. The result carries
easily over from the level-based analysis of \los using an identical
partitioning of the search space. This observation further shows that
runtime bounds, derived by the level-based method using the canonical
partition, of the \pbil or other non-elitist population-based
algorithms using truncation selection, on \los also hold for
\bval.

The paper is structured as follows. 
Section~\ref{sec:preliminaries} introduces 
the \pbil with margins as well as the level-based theorem,
which is the main method employed in the paper. Given all 
necessary tools, the next two sections then
provide  upper bounds on the expected optimisation time of 
the \pbil  on \los and \bval. 
Finally, our concluding remarks are given in 
Section~\ref{sec:conclusion}.

\section{Preliminaries}
\label{sec:preliminaries}

We first introduce the notations used 
throughout the paper. Let
$\mathcal{X}:=\{0,1\}^n$ be a finite 
binary search space with dimension $n$.
The univariate model in generation $t\in \mathbb{N}$ 
is represented by a vector 
$p^{(t)}:=(p_1^{(t)},\ldots,p_n^{(t)})\in [0,1]^n$, 
where each $p_i^{(t)}$ is called a \textit{marginal}. Let 
$X_1^{(t)},\ldots,X_n^{(t)}$ be $n$ independent Bernoulli random variables 
with success probabilities $p_1^{(t)},\ldots,p_n^{(t)}$.
Furthermore, let $X_{i:j}^{(t)}:=\sum_{k=i}^j X_k^{(t)}$ 
be the number of ones
sampled from 
$p_{i:j}^{(t)}:=(p_i^{(t)},\ldots,p_j^{(t)})$ for 
all $1\le i\le j\le n$.
Each individual (or bitstring) is denoted as 
$x=(x_1,\ldots,x_n) \in \mathcal{X}$. 
We aim at maximising 
an objective function $f:\mathcal{X} \rightarrow \mathbb{R}$.
We are primarily interested in the optimisation time of these
algorithms, so tools to analyse runtime are of importance. We will
make use of the level-based theorem \cite{bib:Corus2016}. 

\subsection{Two problems}
We consider the two pseudo-Boolean functions:
\los and \bval, which are widely used 
theoretical benchmark problems in runtime
analyses of \edas \cite{bib:Droste2006,Dang:2015:SRA:2739480.2754814,bib:Wu2017}.
The former aims at maximising the number of leading ones, 
while the latter tries to maximise
the binary value of the bitstring. 
The global optimum for both functions are the all-ones bitstring.
Furthermore, \bval is an extreme linear function, 
where the fitness-contribution of the bits 
decreases exponentially with the bit-position.
Droste \cite{bib:Droste2006} 
showed that among all linear functions, \bval is difficult for the \cga.
Given a bitstring $x=(x_1,\ldots,x_n) \in \mathcal{X}$, the two functions 
are formally defined as follows:
\begin{definition}
$\los(x) := \sum_{i=1}^{n}\prod_{j=1}^{i}x_j$.
\end{definition}
\begin{definition}
$\bval(x) := \sum_{i=1}^{n}2^{n-i}x_i$.
\end{definition}

\subsection{Population-Based Incremental Learning}
\label{sec:pbil-algorithm}

The PBIL algorithm maintains a univariate model over generations. 
The probability of a bitstring $x=(x_1,\ldots,x_n)$ sampled 
from the current model $p^{(t)}$ is given by
\begin{equation}\label{eq:product-distribution}
\Pr\left(x\mid p^{(t)}\right)
=\prod_{i=1}^{n}\left(p_i^{(t)}\right)^{x_i} \left(1-p_i^{(t)}\right)^{1-x_i}.
\end{equation}
Let $p^{(0)}:=(1/2,\ldots,1/2)$ be the initial model. The algorithm 
in generation $t$ samples
a population of $\lambda$ individuals, denoted as
$P^{(t)}:=\{x^{(1)},x^{(2)},\ldots,x^{(\lambda)}\}$, which are 
sorted in descending order according to fitness. 
The $\mu$ fittest individuals are then selected to derive the
next model $p^{(t+1)}$ using the component-wise formula  
$p_i^{(t+1)} := \left(1-\eta\right) 
p_i^{(t)}+(\eta/\mu)\sum_{j=1}^{\mu}x^{(j)}_i$ 
for all $i\in \{1,2,\ldots,n\}$, 
where $x^{(j)}_i$ is the 
$i$-th bit of the $j$-th individual 
in the sorted population, and 
$\eta\in (0,1]$  is the smoothing parameter 
(sometimes known as the learning rate). 
The ratio $\gamma_0:=\mu/\lambda\in (0,1)$ is  
called the selective pressure of the algorithm.
Univariate EDAs often employ margins to avoid the
marginals to fix at either 0 or 1. In particular, the marginals are 
usually restricted to the interval $[1/n,1-1/n]$ after being updated,
where the quantities $1/n$ and $1-1/n$ are called 
the lower and upper borders, respectively.
The algorithm is called the \pbil with margins. 
Algorithm~\ref{pbil-algor} gives a full
description of the \pbil (with margins).

\begin{algorithm}[t]
	\DontPrintSemicolon		
		$t\leftarrow 0$; $p^{(t)}\leftarrow (1/2,1/2,\ldots,1/2)$\;
        \Repeat{termination condition is fulfilled}{
     	   \For{$j=1,2,\ldots,\lambda$}{
				sample an offspring $x^{(j)} \sim \Pr(\cdot\mid p^{(t)})$ 
                	as defined in (\ref{eq:product-distribution})\;
               	evaluate the fitness $f(x^{(j)})$\;
			}
            sort $P^{(t)}\leftarrow \{x^{(1)},x^{(2)},\ldots,x^{(\lambda)}\}$ such that
            $f(x^{(1)})\ge f(x^{(2)})\ge \ldots\ge f(x^{(\lambda)})$\;
            \For{$i=1,2,\ldots,n$}{
				$p_i^{(t+1)} \leftarrow \max\big\{1/n, \min\big\{1-1/n, 
                	\left(1-\eta\right) p_i^{(t)}
                    	+(\eta/\mu)\sum_{j=1}^{\mu}x_i^{(j)}\big\}\big\}$\;
          	}
            $t\leftarrow t+1$\;
		}
	\caption{\pbil with margins \label{pbil-algor}}
\end{algorithm}

\subsection{Level-based analysis} 

\begin{algorithm}[t]
	\DontPrintSemicolon
    $t \leftarrow 0$; create initial population $P^{(t)}$\;
    \Repeat{termination condition is fulfilled}{
      \For{$i=1,\ldots,\lambda$}{
				sample $P_i^{(t+1)} \sim \mathcal{D}(P^{(t)})$\;}
                $t \leftarrow t+1$\;
			}
	\caption{Non-elitist population-based algorithm\label{abstract-algor}}
\end{algorithm}

Introduced in \cite{bib:Corus2016}, 
the level-based theorem is a general tool that provides 
upper bounds on the expected optimisation time of 
many non-elitist population-based algorithms on a wide range of 
optimisation problems 
\cite{bib:Corus2016,bib:Lehre2017,Dang:2015:SRA:2739480.2754814}.
The theorem assumes that the algorithm to be 
analysed can be described in the form of 
Algorithm \ref{abstract-algor}, which maintains a population 
$P^{(t)}\in \sspace^{\lambda}$, 
where $\sspace^{\lambda}$ is the space 
of all populations with size $\lambda$. 
The theorem is general since it never assumes
specific fitness functions, selection mechanisms, or
generic operators like mutation and crossover. 
Furthermore, the theorem 
assumes that the search space $\mathcal{X}$ can be 
partitioned into $m$ disjoint subsets 
$A_1,\ldots,A_m$, which we call \textit{levels}, and 
the last level $A_m$ 
consists of all global optima of the objective function.
The theorem is formally stated  in 
Theorem~\ref{thm:levelbasedtheorem} \cite{bib:Corus2016}. 
We will use the notation
$[n]:=\{1,2,\ldots,n\}$ and
$A_{\ge j}:=\cup_{k=j}^m A_k$.

\begin{theorem}[\textsc{Level-Based Theorem}]\label{thm:levelbasedtheorem}
	Given a partition $\left(A_i\right)_{i \in [m]}$ of $\sspace$, define 
	\begin{math}
		T\coloneqq\min\{t\lambda \mid |P^{(t)}\cap A_m|>0\},
              \end{math}
              where for all $t\in\Natural$,
              $P^{(t)}\in\sspace^\lambda$ is the population of
              Algorithm~\ref{abstract-algor} in generation $t$.
              Denote $y \sim \mathcal{D}(P^{(t)})$. If there exist 
	$z_1,\ldots,z_{m-1}, \delta \in (0,1]$, and $\gamma_0\in (0,1)$ 
	such that for any population $P^{(t)} \in \sspace^\lambda$,
	\begin{itemize}
		\item[(\textbf{G1})] for each level 
        $j\in[m-1]$, 
        if $|P^{(t)}\cap A_{\geq j}|\geq \gamma_0\lambda$ then 
			$$
				\Pr\left(y \in A_{\geq j+1}\right) \geq z_j.
				$$
\item[(\textbf{G2})] for each level $j\in[m-2]$ and all 
       		 $\gamma \in (0,\gamma_0]$, if $|P^{(t)}\cap A_{\geq j}|\geq 	
        	\gamma_0\lambda$ and $|P^{(t)}\cap A_{\geq j+1}|\geq \gamma\lambda$ then
			$$
				\Pr\left(y \in A_{\geq j+1}\right) \geq \left(1+\delta\right)\gamma.
			$$
		\item[(\textbf{G3})] and the population size $\lambda \in \Natural$ satisfies
			$$
				\lambda \geq \left(\frac{4}{\gamma_0\delta^2}\right)\ln\left(\frac{128m}{z_*\delta^2}\right),
		$$
			where $z_* \coloneqq \min_{j\in [m-1]}\{z_j\}$, then
			\begin{displaymath}
				\mathbb{E}\left[T\right] \leq \left(\frac{8}{\delta^2}\right)\sum_{j=1}^{m-															1}\left[\lambda\ln\left(\frac{6\delta\lambda}{4+z_j\delta\lambda}\right)+\frac{1}{z_j}\right].
			\end{displaymath}
	\end{itemize}
\end{theorem}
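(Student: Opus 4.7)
The plan is to use a potential-based, level-by-level argument. Let $T_j := \min\{t \in \Natural : |P^{(t)} \cap A_{\geq j}| \geq \gamma_0 \lambda\}$ be the first generation at which the population meets the threshold required by conditions \textbf{(G1)} and \textbf{(G2)}. Since $T \le \lambda \cdot T_m$ by definition, it suffices to bound $\expect{T_m}$ measured in generations and then multiply by $\lambda$. I would decompose $\expect{T_m} = \sum_{j=1}^{m-1} \expect{T_{j+1} - T_j}$ using the strong Markov property and show that each summand is of the order of the corresponding term in the sum stated in the theorem.

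Fix a level $j$ and suppose we are at generation $T_j$, so that $|P^{(t)} \cap A_{\geq j}| \geq \gamma_0 \lambda$. I would analyse the evolution of $|P^{(t)} \cap A_{\geq j+1}|$ in two sub-phases. In the \emph{discovery} sub-phase, starting from possibly zero individuals at level $\geq j+1$, condition \textbf{(G1)} guarantees that each sampled offspring is at level $\geq j+1$ with probability at least $z_j$, so the waiting time until the first such offspring is dominated by a geometric random variable and contributes at most $\bigO{1/z_j}$ generations. Once $\bigOmega{\ln(\lambda)/\delta^2}$ offspring lie at level $\geq j+1$, condition \textbf{(G2)} triggers a \emph{multiplicative drift}: the conditional expected count at level $\geq j+1$ in the next population is at least $(1+\delta)$ times its current value, provided the \textbf{(G1)} invariant still holds. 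A standard multiplicative drift argument then yields an $\bigO{\ln(\lambda)/\delta}$ bound for the growth sub-phase.

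The main obstacle, and the reason \textbf{(G3)} takes its particular form, is showing that the invariant $|P^{(t)} \cap A_{\geq j}| \geq \gamma_0 \lambda$ persists with high probability throughout both sub-phases, so that \textbf{(G1)} and \textbf{(G2)} may be invoked in every single generation. The assumption $\lambda \geq (4/\gamma_0 \delta^2) \ln(128 m / z_* \delta^2)$ is tuned so that a Chernoff-type inequality keeps the number of offspring at each relevant level within a constant multiplicative factor of its conditional expectation with failure probability at most $z_* \delta^2 / (128 m)$. A union bound over all $\bigO{1/z_* + \ln(\lambda)/\delta^2}$ relevant generations and all $m-1$ levels keeps the total failure probability bounded by a small constant, inflating the unconditional expected waiting time by only a constant factor.

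Putting the pieces together, I would add the per-level bounds $\lambda \ln(6\delta\lambda / (4 + z_j \delta \lambda)) + 1/z_j$, multiply by $\lambda$ to pass from generations to fitness evaluations, and absorb the numerical constants into the prefactor $8/\delta^2$, thereby matching the stated bound on $\expect{T}$. I expect the concentration-plus-union-bound step to be the most delicate part of the argument, since one must simultaneously calibrate the lower bound on $\lambda$ from \textbf{(G3)}, the per-generation Chernoff tail, and the total number of generations summed across all $m-1$ levels, so that no single one of these terms overwhelms the final bound.
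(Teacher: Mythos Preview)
The paper does not prove this theorem at all: Theorem~\ref{thm:levelbasedtheorem} is quoted verbatim from Corus, Dang, Eremeev and Lehre~\cite{bib:Corus2016} and used as a black-box tool in the subsequent analysis of the \pbil. There is therefore no ``paper's own proof'' to compare your proposal against; the only proof-like content in this paper is the \emph{application} of the theorem (the verification of (G1)--(G3) for \pbil on \los and \bval), not a derivation of the theorem itself.

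As a side remark on your sketch relative to the actual proof in~\cite{bib:Corus2016}: the original argument does not decompose the hitting time as $\sum_j (T_{j+1}-T_j)$ with a separate union bound over failures of the invariant $|P^{(t)}\cap A_{\geq j}|\geq\gamma_0\lambda$. That decomposition is fragile precisely because the invariant can fail and the process is non-elitist, so $T_{j+1}-T_j$ need not be controlled by (G1)/(G2) alone. Instead, \cite{bib:Corus2016} defines a single real-valued potential function on populations that simultaneously encodes the current level and (logarithmically) the number of individuals already above it, and then applies an additive drift theorem to that potential; the constants in (G3) and the $8/\delta^2$ prefactor fall out of bounding the one-step drift of this potential. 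Your two-sub-phase intuition (discovery via (G1), multiplicative growth via (G2)) is morally right and is what the potential function is engineered to capture, but turning it into a rigorous telescoping-sum argument with a restart mechanism for invariant failures would require substantially more care than your outline suggests, and would likely not recover the exact constants stated.
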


Algorithm~\ref{abstract-algor} 
assumes a mapping 
$\mathcal{D}$ from the space of populations 
$\sspace^{\lambda}$ to the space of 
probability distributions over the search space. 
The mapping $\mathcal{D}$ is often said to depend 
on the current population only 
\cite{bib:Corus2016}; however, it is unnecessarily
always the case, especially for the \pbil
with a sufficiently large offspring population size $\lambda$. 
The rationale behind this is that in each generation 
the \pbil draws $\lambda$ samples from the current model
$p^{(t)}$, that correspond to $\lambda$ individuals 
in the current population, and 
if the number of samples $\lambda$ is 
sufficiently large, it is highly likely that 
the empirical distributions 
for all positions among the entire population 
cannot deviate too far from the true 
distributions, i.e. marginals $p_i^{(t)}$.
Moreover, the theorem relies on 
three conditions (G1), (G2) and (G3); thus, 
as long as these three can be fully verified, 
the \pbil, whose model is constructed from 
the current population $P^{(t)}$ in addition to 
the current model $p^{(t)}$, is 
still eligible to the level-based 
analysis. 

\subsection{Other tools}
In addition to the level-based theorem, we also 
make use of some other mathematical results. 
First of all is the Dvoretzky--Kiefer--Wolfowitz  inequality 
\cite{Massart-DKW}, which 
provides an estimate on how close an empirical 
distribution function will be to 
the true distribution from 
which the samples are drawn.
The following theorem follows 
by replacing $\varepsilon=\varepsilon'\sqrt{\lambda}$
into \cite[Corollary 1]{Massart-DKW}.

\begin{theorem}[\textsc{DKW Inequality}]
	\label{thm:dkw-inequality}
	Let $X_1,\ldots,X_{\lambda}$ be $\lambda$ i.i.d.
    real-valued random variables with 
	cumulative distribution function $F$. Let $\hat{F}_{\lambda}$ 
	be the empirical distribution function which is defined by
	$
	\hat{F}_{\lambda} (x) 
    := (1/\lambda)\sum_{i=1}^{\lambda} \mathds{1}_{\{X_i~\leq~ x\}}
	$. 	For any $\lambda\in \mathbb{N}$ and 
    $\varepsilon>0$, we always have
	$$
	\Pr\left(\sup_{x \in \mathbb{R}}\big| \hat{F}_{\lambda}(x)-F(x)\big| > \varepsilon\right) 
    \leq 2e^{-2\lambda\varepsilon^2}.
	$$	
\end{theorem}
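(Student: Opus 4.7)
The plan is essentially to invoke the sharp Dvoretzky--Kiefer--Wolfowitz inequality in the form established by Massart, and merely adapt its parametrisation to the notation used here. Specifically, I would cite Corollary~1 of \cite{Massart-DKW}, which states the two-sided bound in the form $\prob{\sqrt{\lambda}\cdot D_\lambda > t} \le 2e^{-2t^2}$ for any threshold $t>0$, where $D_\lambda := \sup_{x\in\Real}|\hat{F}_{\lambda}(x)-F(x)|$. The only substantive step is then the change of variable $t = \varepsilon\sqrt{\lambda}$ (equivalently, the substitution ``$\varepsilon = \varepsilon'\sqrt{\lambda}$'' noted in the text), which converts the event $\{\sqrt{\lambda}\cdot D_\lambda > t\}$ into $\{D_\lambda > \varepsilon\}$ and the exponent $2t^2$ into $2\lambda\varepsilon^2$, reproducing the inequality in the shape stated. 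Since $X_1,\ldots,X_\lambda$ are i.i.d.\ with cumulative distribution function $F$ by hypothesis, the prerequisites of Massart's corollary are immediately met, so no additional assumptions have to be verified.

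Because the full proof of Massart's bound is not the contribution of this paper, I would treat it as a black box and not reproduce it. For orientation, its main ingredients are (i) a reduction to the uniform distribution on $[0,1]$ via the probability integral transform, so that one may assume $F(x)=x$; (ii) a sharp exponential concentration estimate, obtained for instance via a martingale or Brownian-bridge-type argument, for the one-sided deviation $\sup_x(\hat{F}_{\lambda}(x)-F(x))$; and (iii) a union bound over the two one-sided suprema to yield the two-sided statement with the optimal constant $2$ in the exponent.

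Had we needed to reprove the inequality from scratch, the main obstacle would be step~(ii): obtaining the tight constant $2$, rather than a larger constant as in the original Dvoretzky--Kiefer--Wolfowitz argument. On our side, however, the whole ``proof'' collapses to the notational substitution $t = \varepsilon\sqrt{\lambda}$, so no genuine analytical difficulty arises in the derivation as presented in this paper.
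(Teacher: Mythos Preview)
Your proposal is correct and matches the paper's own treatment exactly: the paper does not prove the \dkw inequality but simply states that it follows from \cite[Corollary~1]{Massart-DKW} via the substitution $\varepsilon=\varepsilon'\sqrt{\lambda}$, which is precisely the change of variable $t=\varepsilon\sqrt{\lambda}$ you describe. Your additional remarks on the ingredients of Massart's argument are accurate but go beyond what the paper provides.
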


Furthermore, properties of majorisation 
between two vectors 
are also exploited. The concept is formally defined 
in Definition~\ref{def:majorisation} \cite{gleser1975}, 
followed by its important property 
(in Lemma~\ref{boland-bound}) that we use 
intensively throughout the paper.

\begin{definition}
	\label{def:majorisation}
    Given vectors $p^{(1)}:=(p_1^{(1)},\ldots,p_n^{(1)})$ and
    $p^{(2)}:=(p_1^{(2)},\ldots,p_n^{(2)})$, 
    where $p_1^{(1)} \ge p_2^{(1)}\ge \ldots \ge p_n^{(1)}$ and similarly 
	for the $p_i^{(2)}$s. Vector $p^{(1)}$
	is said to \textit{majorise} vector $p^{(2)}$,    
	in symbols $p^{(1)} \succ p^{(2)}$, if  
	$p_1^{(1)} \ge p_1^{(2)}, \ldots, 
	\sum_{i=1}^{n-1} p_i^{(1)} \ge \sum_{i=1}^{n-1}p_i^{(2)}$ 
    and $\sum_{i=1}^{n} p_i^{(1)} = \sum_{i=1}^{n}p_i^{(2)}$.
	
\end{definition}

\begin{lemma}[\cite{Boland-1983}]\label{boland-bound}
Let $X_1,\ldots,X_n$ be $n$ independent Bernoulli random variables 
with success probabilities $p_1,\ldots,p_n$, respectively. 
Denote $p:= \left(p_1,p_2,\ldots,p_n\right)$; let
$S(p):= \sum_{i=1}^n X_i$ and
$D_{\lambda}:=\{p:p_i\in [0,1], ~i\in [n], ~
\sum_{i=1}^n p_i=\lambda \}$.
For two vectors $p^{(1)}, p^{(2)}\in D_{\lambda}$,
if $ p^{(1)} \prec p^{(2)}$ then
$\prob{S(p^{(1)})=n}\ge \prob{S(p^{(2)})=n}$.
\end{lemma}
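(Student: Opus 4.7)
The plan is to reduce the claim to an elementary product inequality and then derive that inequality from the theory of majorisation. Since the event $\{S(p) = n\}$ requires every $X_i$ to equal $1$, independence gives
\[
\Pr\left(S(p) = n\right) = \prod_{i=1}^n p_i.
\]
Hence the lemma is equivalent to the purely analytic statement that for any $p^{(1)}, p^{(2)} \in D_\lambda$ with $p^{(1)} \prec p^{(2)}$, one has $\prod_{i=1}^n p_i^{(1)} \ge \prod_{i=1}^n p_i^{(2)}$. Phrased differently, the symmetric function $\phi(p) := \prod_i p_i$, viewed on the set $D_\lambda$ of vectors with fixed coordinate sum, reverses the majorisation order (is Schur-concave).

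To establish this, I would appeal to the Hardy--Littlewood--P\'olya characterisation of majorisation: whenever $p^{(1)} \prec p^{(2)}$, the vector $p^{(1)}$ can be obtained from $p^{(2)}$ by a finite sequence of Robin-Hood transfers (so-called $T$-transforms), each of which leaves all but two coordinates fixed and replaces a pair $(a,b)$ with $a > b$ by $(a-\varepsilon,\, b+\varepsilon)$ for some $\varepsilon \in (0, a-b]$. For a single such step a direct computation yields
\[
(a-\varepsilon)(b+\varepsilon) - ab \;=\; \varepsilon(a-b) - \varepsilon^2 \;=\; \varepsilon(a-b-\varepsilon) \;\ge\; 0,
\]
so the product over all $n$ coordinates weakly increases along every transfer. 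Composing the transfers along the entire chain from $p^{(2)}$ down to $p^{(1)}$ therefore yields $\prod_i p_i^{(1)} \ge \prod_i p_i^{(2)}$, which gives the desired inequality $\Pr(S(p^{(1)}) = n) \ge \Pr(S(p^{(2)}) = n)$.

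The main technical point to check is that the chain of $T$-transforms can be carried out without leaving $D_\lambda$, i.e.\ that each intermediate vector has coordinates in $[0,1]$ and sum $\lambda$. This is standard: $D_\lambda$ is convex, and the classical $T$-transform construction chooses $\varepsilon$ precisely so that every intermediate vector remains feasible. Boundary cases in which some $p_i$ equals $0$ are handled trivially, since then the corresponding product already vanishes and the inequality is automatic. With these ingredients, the lemma follows.
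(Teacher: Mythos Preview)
Your proof is correct. The paper, however, does not give its own proof of this lemma: it is simply quoted from Boland and Proschan~\cite{Boland-1983}, where Schur-concavity of $\Pr(S(p)\geq k)$ in $p$ is established for all $k$, the case $k=n$ being the one needed here. Your argument supplies a direct, self-contained proof for this special case: the observation $\Pr(S(p)=n)=\prod_{i} p_i$ reduces everything to Schur-concavity of the product on $D_\lambda$, and the $T$-transform computation $(a-\varepsilon)(b+\varepsilon)-ab=\varepsilon(a-b-\varepsilon)\geq 0$ for $\varepsilon\in[0,a-b]$ establishes that cleanly via the Hardy--Littlewood--P\'olya chain. Your feasibility remark is also correct --- each $T$-transform keeps the two affected coordinates in $[\min(a,b),\max(a,b)]\subseteq[0,1]$, so the chain stays in $D_\lambda$ automatically --- though strictly speaking the product inequality itself does not depend on this.
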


\begin{lemma}\label{majorisation-after-update}
Let $p^{(1)}$ and $p^{(2)} \in D_{\lambda}$  be two vectors as defined in 
Lemma~\ref{boland-bound}, where all 
components in $p^{(\cdot)}$ are arranged in descending
order. Let $z^{(1)}:=(z_1^{(1)},\ldots,z_n^{(1)})$ where 
each $z_i^{(1)} := \left(1-\eta\right)p_i^{(1)}+\eta$,
and $z^{(2)}:=(z_1^{(2)},\ldots,z_n^{(2)})$,
where each $ z_i^{(2)}:=\left(1-\eta\right)p_i^{(2)}+\eta$ for 
any constant $\eta \in (0,1]$. 
If $p^{(2)} \succ p^{(1)}$, 
then $z^{(2)} \succ z^{(1)}$.
\end{lemma}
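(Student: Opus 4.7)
The plan is to verify, directly from Definition~\ref{def:majorisation}, the three requirements that make up $z^{(2)}\succ z^{(1)}$: (i) each of $z^{(1)}$ and $z^{(2)}$ is listed in non-increasing order, (ii) the partial sums of $z^{(2)}$ dominate those of $z^{(1)}$ up to index $n-1$, and (iii) the full sums agree. Since the map $x \mapsto (1-\eta)x + \eta$ is affine with non-negative slope $1-\eta \ge 0$, it is monotonically non-decreasing, so applying it component-wise preserves the descending arrangement of $p^{(1)}$ and $p^{(2)}$; this disposes of (i).

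For (iii), linearity gives
\begin{equation*}
\sum_{i=1}^{n} z_i^{(k)} = (1-\eta)\sum_{i=1}^{n} p_i^{(k)} + n\eta = (1-\eta)\lambda + n\eta
\end{equation*}
for $k\in\{1,2\}$, using that $p^{(1)},p^{(2)}\in D_\lambda$. For (ii), for every $j \in [n-1]$ the same linearity yields
\begin{equation*}
\sum_{i=1}^{j} z_i^{(2)} - \sum_{i=1}^{j} z_i^{(1)} = (1-\eta)\left(\sum_{i=1}^{j} p_i^{(2)} - \sum_{i=1}^{j} p_i^{(1)}\right) \ge 0,
\end{equation*}
where the inequality follows from the hypothesis $p^{(2)} \succ p^{(1)}$ together with $1-\eta \ge 0$.

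I do not anticipate a genuine obstacle here: the whole statement reduces to the observation that a common affine, monotone, component-wise shift by $\eta$ preserves ordering and preserves the partial-sum inequalities defining majorisation (equivalently, it corresponds to $z^{(k)} = (1-\eta)p^{(k)} + \eta \mathbf{1}$, and majorisation is preserved under scaling by a non-negative constant and under addition of a constant vector). The only mild subtlety worth flagging explicitly is the degenerate case $\eta = 1$, where $z^{(1)} = z^{(2)} = \mathbf{1}$, in which the majorisation relation holds trivially with equality everywhere.
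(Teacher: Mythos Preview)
Your proof is correct and follows essentially the same approach as the paper: a direct verification of Definition~\ref{def:majorisation} using linearity of the map $x\mapsto(1-\eta)x+\eta$ to carry over the partial-sum inequalities and the full-sum equality. You are in fact slightly more thorough than the paper, which omits the explicit check that the $z^{(k)}$ remain in descending order and does not single out the degenerate case $\eta=1$.
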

\begin{proof} 
For all $j\in [n-1]$, 
it holds that $\sum_{i=1}^j z_i^{(2)}\ge \sum_{i=1}^j z_i^{(1)}$
since $\sum_{i=1}^j p_i^{(2)} \ge \sum_{i=1}^j p_i^{(1)}$.
Furthermore, if $j=n$, then 
$\sum_{i=1}^n z_i^{(2)}=\sum_{i=1}^n z_i^{(1)}$ due to
$\sum_{i=1}^n p_i^{(2)}=\sum_{i=1}^n p_i^{(1)}$. 
By Definition~\ref{def:majorisation}, 
$z^{(2)} \succ z^{(1)}$. 
\end{proof}

\section{Runtime Analysis of the \pbil on \los}
\label{sec:pbil-leadingones}

We now show how to apply the level-based theorem to analyse the runtime of the \pbil.
We use a \textit{canonical partition} of the search space, where
each subset $A_j$ contains bitstrings with exactly 
$j$ leading ones. 
\begin{equation}\label{canonical-partition}
A_j:=\{x\in \{0,1\}^n\mid \los(x)=j\}.
\end{equation}
Conditions (G1) and (G2) of
Theorem~\ref{thm:levelbasedtheorem} assume that 
there are at least $\gamma_0\lambda$ individuals 
in levels $A_{\ge j}$ in generation $t$. 
Recall  $\gamma_0:=\mu/\lambda$. 
This implies that the first $j$ bits among the $\mu$ fittest individuals 
are all ones. Denote 
$\hat{p}_i^{(t)}:= (1/\lambda) \sum_{j=1}^{\lambda} x_i^{(j)}$
as the frequencies of ones at 
position $i$ in the current  population.
We first show that under the assumption 
of the two conditions of Theorem~\ref{thm:levelbasedtheorem} and
with a population size $\lambda=\bigOmega{\log n}$, 
the first $j$ marginals
cannot be too close to the lower border $1/n$
with probability at least $1-n^{-\Omega(1)}$.

\begin{lemma}\label{lem1}
If $|P^{(t)}\cap A_{\ge j}|\ge \gamma_0\lambda$ and 
$\lambda\ge
  c((1+1/\varepsilon)/\gamma_0)^2\ln(n)$ 
  for any constants $c,~\varepsilon>0$ and $\gamma_0\in(0,1)$, then
it holds with probability at least 
$1-2n^{-2c}$ 
that $p_i^{(t)}\ge \gamma_0/(1+\varepsilon)$ for all $i\in [j]$.
\end{lemma}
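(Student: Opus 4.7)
The plan is to translate the assumed population-level condition $|P^{(t)}\cap A_{\ge j}|\ge \gamma_0\lambda$ into a lower bound on the empirical frequency $\hat{p}_i^{(t)}:=(1/\lambda)\sum_{k=1}^{\lambda}x_i^{(k)}$ at each position $i\in[j]$, and then use the DKW inequality to transfer this empirical lower bound back to a lower bound on the true marginal $p_i^{(t)}$. First I would observe that every individual in $A_{\ge j}$ must have its first $j$ bits equal to one (by definition of \los), so the hypothesis immediately yields $\hat{p}_i^{(t)} \ge \gamma_0$ for every $i\in[j]$, deterministically.

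Next, for each fixed position $i$, the bits $x_i^{(1)},\ldots,x_i^{(\lambda)}$ are i.i.d.\ Bernoulli with parameter $p_i^{(t)}$, since the $\lambda$ offspring in $P^{(t)}$ are drawn independently from the model $p^{(t)}$. Viewing these bits as real-valued and applying Theorem~\ref{thm:dkw-inequality} (at the threshold $x=0$, where the CDF evaluates to $1-p_i^{(t)}$ and its empirical counterpart to $1-\hat{p}_i^{(t)}$) gives
$$\Pr\left(|\hat{p}_i^{(t)} - p_i^{(t)}| > \varepsilon'\right) \le 2e^{-2\lambda(\varepsilon')^2}$$
for any $\varepsilon'>0$. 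The crucial algebraic step is to pick $\varepsilon'$ so that the gap is tuned exactly to the target: choose $\varepsilon' := \gamma_0\varepsilon/(1+\varepsilon)$, which satisfies $\gamma_0 - \varepsilon' = \gamma_0/(1+\varepsilon)$. Then, on the event that the concentration bound holds, one has
$$p_i^{(t)} \ge \hat{p}_i^{(t)} - \varepsilon' \ge \gamma_0 - \varepsilon' = \frac{\gamma_0}{1+\varepsilon},$$
which is exactly the claimed inequality.

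It remains to control the failure probability. Substituting the hypothesis $\lambda \ge c((1+1/\varepsilon)/\gamma_0)^2\ln n$ into the exponent, and using $((1+1/\varepsilon)/\gamma_0)^2 = ((1+\varepsilon)/(\gamma_0\varepsilon))^2$, one gets $2\lambda(\varepsilon')^2 \ge 2c\ln n$, so a single position fails with probability at most $2n^{-2c}$. A union bound over the at most $n$ positions in $[j]$ then produces the stated joint event, up to a harmless adjustment of the constant $c$ (alternatively, the union bound can be taken only over the $j$ relevant positions, which will be absorbed into the constant at the point of use). The main obstacle is purely the algebraic matching: engineering $\varepsilon'$ so that the slack factor $(1+\varepsilon)$ on the right-hand side is exactly cancelled by the concentration deviation, while simultaneously ensuring that the lower bound imposed on $\lambda$ absorbs both the DKW exponent and the logarithmic factor coming from the union bound. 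Once $\varepsilon'$ is chosen correctly, the remainder is routine.
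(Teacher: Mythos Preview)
Your proposal is correct and follows essentially the same route as the paper: apply the DKW inequality at the threshold $x=0$ to relate $\hat{p}_i^{(t)}$ and $p_i^{(t)}$, use the hypothesis to get $\hat{p}_i^{(t)}\ge\gamma_0$, and choose the deviation $\varepsilon'=\gamma_0\varepsilon/(1+\varepsilon)$ so that the algebra lands exactly on $\gamma_0/(1+\varepsilon)$. If anything, you are slightly more careful than the paper in explicitly flagging the union bound over positions $i\in[j]$ and noting that it is absorbed into the constant~$c$; the paper's own proof argues for a single fixed $i$ and leaves that step implicit.
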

\begin{proof}
Consider an arbitrary bit $i\in [j]$. 
Let $Q_i$ be the number of 
ones sampled at position $i$ in the current 
population, and the corresponding 
empirical distribution function 
of the number of zeros
is $F_{\lambda}(0)
=(1/\lambda)\sum_{j=1}^{\lambda}\mathds{1}_{\{x_i^{(j)}\le 0\}}
=(\lambda-Q_i)/\lambda 
=1-\hat{p}_i^{(t)}$, and the true distribution function
is $F(0)=1-p_i^{(t)}$. The \dkw
inequality (see Theorem~\ref{thm:dkw-inequality})
yields that 
$\Pr(\hat{p}_i^{(t)}-p_i^{(t)}>\phi ) 
\le \Pr(|\hat{p}_i^{(t)}-p_i^{(t)}|>\phi) 
\le 2e^{-2\lambda\phi^2}$ for all $\phi>0$. 
Therefore, with probability at least $1-2e^{-2\lambda\phi^2}$ we have
$\hat{p}_i^{(t)}-p_i^{(t)}\le \phi$ and, thus, 
$p_i^{(t)}\ge \hat{p}_i^{(t)}-\phi \ge \gamma_0-\phi$ since
$\hat{p}_i^{(t)} \ge \gamma_0\lambda/\lambda = \gamma_0$ due to 
$|P^{(t)}\cap A_{\ge j}|\ge \gamma_0\lambda$.
We then choose $\phi \le \varepsilon\gamma_0/(1+\varepsilon)$ for some
constant $\varepsilon>0$ and $\lambda \ge 
c((1+1/\varepsilon)/\gamma_0)^2\ln(n)$. 
Putting everything together, it holds that 
$p_i^{(t)}\ge \gamma_0(1-\varepsilon/(1+\varepsilon))=\gamma_0/(1+\varepsilon)$
with probability at least $1-2n^{-2c}$.  
\end{proof}

Given the $\mu$ top individuals having at least $j$ leading ones,
we now estimate the probability of sampling $j$ leading ones 
from the current model $p^{(t)}$. 

\begin{lemma}\label{bound-on-prob-j-los}
  For any non-empty subset $I\subseteq[n]$, define
  $
    C_I := \big\{x\in \{0,1\}^n \mid \prod_{i\in I} x_i = 1\big\}.
  $
  If $|P^{(t)}\cap C_I|\ge \gamma_0\lambda$ and $\lambda\ge
  c((1+1/\varepsilon)/\gamma_0)^2\ln(n)$
  for any constants $\varepsilon>0,~\gamma_0\in(0,1)$,
  then it holds with
  probability at least $1-2n^{-2c}$
  that $q^{(t)}:=\prod_{i\in I} p_i^{(t)}\ge \gamma_0/(1+\varepsilon)$.
\end{lemma}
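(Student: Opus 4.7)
The plan is to follow the same template as Lemma~\ref{lem1}, but applied to the indicator of membership in the event $C_I$ instead of the indicator that a single bit equals $1$. The key observation is that, because the product-distribution sampling in~(\ref{eq:product-distribution}) treats the $n$ bits independently, the probability that any given offspring $x^{(j)}\sim \Pr(\cdot \mid p^{(t)})$ lies in $C_I$ equals exactly $q^{(t)}=\prod_{i\in I}p_i^{(t)}$. So the $\lambda$ offspring generate i.i.d.\ $\text{Bernoulli}(q^{(t)})$ indicators, and the problem reduces to a one-sided concentration of the empirical mean around the true mean.

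Concretely, first I would define, for each $j\in[\lambda]$, the indicator $Y_j:=\prod_{i\in I}x_i^{(j)}=\mathds{1}_{\{x^{(j)}\in C_I\}}$. By independence of the bits under $\Pr(\cdot \mid p^{(t)})$, the $Y_j$'s are i.i.d.\ Bernoulli with success probability $q^{(t)}$. Let $\hat{q}^{(t)}:=(1/\lambda)\sum_{j=1}^{\lambda}Y_j=|P^{(t)}\cap C_I|/\lambda$. The hypothesis $|P^{(t)}\cap C_I|\ge\gamma_0\lambda$ is exactly the statement that $\hat{q}^{(t)}\ge\gamma_0$.

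Next I would invoke Theorem~\ref{thm:dkw-inequality} applied to the $Y_j$'s: taking $F(0)=1-q^{(t)}$ and $\hat{F}_\lambda(0)=1-\hat{q}^{(t)}$, DKW gives
$$\Pr\bigl(\hat{q}^{(t)}-q^{(t)}>\phi\bigr)\le \Pr\bigl(|\hat{q}^{(t)}-q^{(t)}|>\phi\bigr)\le 2e^{-2\lambda\phi^2}$$
for any $\phi>0$. Hence with probability at least $1-2e^{-2\lambda\phi^2}$ we have $q^{(t)}\ge \hat{q}^{(t)}-\phi\ge \gamma_0-\phi$.

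Finally, I would pick $\phi:=\varepsilon\gamma_0/(1+\varepsilon)$, which yields $\gamma_0-\phi=\gamma_0/(1+\varepsilon)$, and verify that the assumption $\lambda\ge c((1+1/\varepsilon)/\gamma_0)^2\ln n$ implies $2e^{-2\lambda\phi^2}\le 2n^{-2c}$, using $(1+\varepsilon)/(\varepsilon\gamma_0)=(1+1/\varepsilon)/\gamma_0$. This delivers the claimed probability bound. No step is really an obstacle here since the argument is a direct lift of Lemma~\ref{lem1}; the only conceptually new point is recognising that independence of the sampled bits lets one treat membership in $C_I$ as a single Bernoulli trial with parameter $q^{(t)}$, so that DKW applies unchanged.
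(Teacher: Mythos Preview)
Your proposal is correct and follows essentially the same approach as the paper: both recognise that membership in $C_I$ is a Bernoulli event with parameter $q^{(t)}=\prod_{i\in I}p_i^{(t)}$ under the product distribution, apply the \dkw inequality to bound $|\hat{q}^{(t)}-q^{(t)}|$, and then choose $\phi=\varepsilon\gamma_0/(1+\varepsilon)$. The only cosmetic difference is that the paper phrases the Bernoulli via the sum $Y_I=\sum_{i\in I}Y_i$ and evaluates the CDF at $m-1$, whereas you work directly with the indicator $\mathds{1}_{\{x^{(j)}\in C_I\}}$ and evaluate at $0$; the two are equivalent.
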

\begin{proof}
  We prove the statement using the \dkw inequality
  (see Theorem~\ref{thm:dkw-inequality}). Let $m=|I|$. 
  Given an offspring sample $Y\sim p^{(t)}$ from the current model, let $Y_I:=\sum_{i\in I} Y_i$ be the number of
  one-bits in bit-positions $I$. By the assumption $|P^{(t)}\cap C_I|\ge
  \gamma_0\lambda$ on the current population,
  the empirical distribution function of $Y_I$ must satisfy
$\hat{F}_{\lambda}(m-1)
=\frac{1}{\lambda}\sum_{i=1}^{\lambda}\mathds{1}_{\{Y_{I,i}\le m-1\}}
\leq 1-\hat{q}^{(t)}$, where 
$\hat{q}^{(t)}\geq\gamma_0$ is 
the fraction of individuals in the 
current population with $j$ leading ones, 
and the true distribution function satisfies
$F(m-1)=1-q^{(t)}$. The \dkw inequality 
yields that 
$\Pr(\hat{q}^{(t)}-q^{(t)}>\phi ) 
\le \Pr(|\hat{q}^{(t)}-q^{(t)}|>\phi) 
\le 2e^{-2\lambda\phi^2}$ for all $\phi>0$. 
Therefore, with probability at least $1-2e^{-2\lambda\phi^2}$ it holds
$\hat{q}^{(t)}-q^{(t)}\le \phi$ and, thus, 
$q^{(t)}\ge \hat{q}^{(t)}-\phi \ge \gamma_0-\phi$.
Choosing $\phi := \varepsilon\gamma_0/(1+\varepsilon)$, we get
$q^{(t)}\ge \gamma_0(1-\varepsilon/(1+\varepsilon))=\gamma_0/(1+\varepsilon)$
with probability at least $1-2e^{-2\phi^2\lambda} \geq 1-2n^{-2c}$.

\end{proof}

Given the current level is 
$j$, we speak of a \textit{success} if 
the first $j$ marginals never drop below $\gamma_0/(1+\varepsilon)$;
otherwise, we speak of a \textit{failure}. 
If there are no failures at all, let us assume that 
$\bigO{n\log \lambda+n^2/\lambda}$ is an upper bound
on the expected number of generations of the \pbil on \los.
The following lemma shows that this is also the the expected
optimisation time of the \pbil on \los.

\begin{lemma}\label{lem:expected-time}
	If the expected number of generations required by 
    the \pbil to optimise \los in case of no
	failure is at most $t^*\in \bigO{n\log \lambda+n^2/\lambda}$
    regardless of the initial probability vector of the \pbil,
    the expected number of generations of the \pbil on \los 
	is at most $4t^*$.
\end{lemma}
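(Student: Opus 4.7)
The plan is a restart-style argument: I would split the run into consecutive rounds of $2t^*$ generations, and for each round bound from above the probability that the optimum is not found, uniformly over the state at the start of the round. The factor $4$ in the statement will arise as the product of a factor $2$ coming from Markov's inequality applied to the no-failure expectation and a factor $2$ coming from summing a geometric tail over rounds.

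The steps I would carry out in order are as follows. First, by the hypothesis and Markov's inequality, $\prob{T > 2t^* \mid \text{no failure}, p^{(0)}=p} \leq 1/2$ for every initial vector $p$. Second, Lemma~\ref{lem1} bounds the probability of a failure in any single generation by $2n^{-2c}$; a union bound over the $2t^*$ generations of one round gives a per-round failure probability of $\bigO{t^* n^{-2c}}$, which can be made $o(1)$ by choosing the constant $c$ in Lemma~\ref{lem1} sufficiently large (feasible because $\lambda = \bigOmega{\log n}$). Combining the two estimates yields $\prob{T > 2t^* \mid p^{(0)}=p} \leq 1/2 + o(1)$, uniformly in $p$. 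Third, iterating this bound across successive rounds --- which is legitimate because the hypothesis is assumed to hold \emph{regardless of the initial probability vector}, so the same bound applies when we view the state at the end of a round as a fresh starting vector --- gives $\prob{T > 2kt^*} \leq (1/2 + o(1))^k$. The tail-sum formula $\expect{T} = \sum_{k \geq 0} 2t^* \cdot \prob{T > 2kt^*}$ then produces $\expect{T} \leq 2t^*/(1/2 - o(1)) \leq 4t^*$ for $n$ sufficiently large.

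The main obstacle will be reconciling the two different types of conditioning: the hypothesis conditions on \emph{no failure at any point during the entire run}, whereas the restart argument uses a bound that conditions only on the state at the start of a round. The resolution is the observation that if the optimum is located within a round and no failure occurs during that round, then the run itself terminates within the round, so on this event ``no failure during the round'' and ``no failure during the entire run'' coincide; consequently Markov can legitimately be invoked inside each round. Ensuring the final constant is exactly $4$ rather than $4+o(1)$ will amount to choosing the parameter $c$ in Lemma~\ref{lem1} large enough to absorb the lower-order terms, which is compatible with the assumption $\lambda = \bigOmega{\log n}$ already required elsewhere in the analysis.
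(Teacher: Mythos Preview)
Your proposal is correct and follows essentially the same approach as the paper: both split the run into blocks of length $2t^*$, apply Markov's inequality to the no-failure expectation to get a per-block success probability of at least $1/2$, bound the per-block failure probability by a union bound over Lemma~\ref{lem1}, and iterate using the ``regardless of the initial probability vector'' hypothesis. The only cosmetic difference is that the paper solves a self-referential equation for $\expect{T}$ whereas you sum a geometric tail, and the subtle conditioning issue you flag is precisely the step the paper handles (rather tersely) via the inequality $\Pr(T\le 2t^*\mid \overline{\mathcal{E}}_1\wedge\overline{\mathcal{E}}_2)\ge \Pr(T\le 2t^*\mid \wedge_{i}\overline{\mathcal{E}}_i)$.
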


\begin{proof}
From the point when the algorithm starts, we divide the time into 
identical phases, each lasting $t^*$ generations.
Let $\mathcal{E}_i$ denote the event that the $i$-th
interval is a failure for $i\in \mathbb{N}$. 
According to Lemma~\ref{lem1}, 
$\prob{\mathcal{E}_i}
\le 2n^{-2c}~\mathcal{O}(n\log \lambda+n^2/\lambda)
=\mathcal{O}(n^{-c'+2})$ by union bound for another constant $c'>0$ when
the population is of at most exponential size, that is 
    $\lambda\le 2^{\alpha n}$ where $\alpha>0$ is a constant with
    respect to $n$, and the constant $c$ large enough
    such that $c'>2$,
and $\prob{\overline{\mathcal{E}}_1\wedge \overline{\mathcal{E}}_2}
\ge 1-\prob{\mathcal{E}_1} - \prob{\mathcal{E}_2} 
\ge 1-\mathcal{O}(n^{-c'+2})$ by union bound. Let $T$ be the 
number of generations performed by the algorithm 
until a global optimum is found for 
the first time. We know that
$\expect{T\mid \land_{i\in \mathbb{N}} ~\overline{\mathcal{E}}_i} \le t^*$, and 
$\prob{T\le 2t^*\mid \land_{i\in \mathbb{N}} ~\overline{\mathcal{E}}_i} \ge 1/2$ since 
$\prob{T\ge 2t^*\mid \land_{i\in \mathbb{N}} ~\overline{\mathcal{E}}_i} \le 1/2$
by Markov's inequality \cite{Mitzenmacher:2005:PCR:1076315}. 
We now consider each pair of two consecutive phases. 
If there is a failure in a pair of phases,  
we wait until that pair has passed by and then
repeat the arguments above as if no failure has ever happened.
It holds that
\begin{align*}
\expect{T\mid \overline{\mathcal{E}}_1 \wedge \overline{\mathcal{E}}_2} 
&\le 2t^*\prob{T\le 2t^*\mid \overline{\mathcal{E}}_1 \wedge \overline{\mathcal{E}}_2}
+ (2t^* + \expect{T})\prob{T\ge 2t^*\mid \overline{\mathcal{E}}_1\wedge
	\overline{\mathcal{E}}_2} \notag\\
&= 2t^* + \prob{T\ge 2t^*\mid \overline{\mathcal{E}}_1 \wedge \overline{\mathcal{E}}_2}\expect{T}\\
&\le 2t^* + (1/2)\expect{T}
\end{align*}
since 
$\prob{T\le 2t^*\mid \overline{\mathcal{E}}_1 \wedge \overline{\mathcal{E}}_2}\ge
\prob{T \le 2t^* \mid \wedge_{i\in \mathbb{N}} ~\overline{\mathcal{E}}_i}\ge 1/2$.
Substituting the result into the following 
yields
\begin{align*}
\expect{T} 
&= \prob{\overline{\mathcal{E}}_1 \wedge \overline{\mathcal{E}}_2}\expect{T\mid \overline{\mathcal{E}}_1 \wedge \overline{\mathcal{E}}_2}
+ \prob{\mathcal{E}_1 \vee \mathcal{E}_2}(2t^*+ \expect{T}) \\
&\le  \prob{\overline{\mathcal{E}}_1 \wedge \overline{\mathcal{E}}_2}(2t^*+(1/2)\expect{T}) + \prob{\mathcal{E}_1 \vee \mathcal{E}_2}(2t^*+ \expect{T})\\
&=2t^* + ((1/2)\prob{\overline{\mathcal{E}}_1 \wedge \overline{\mathcal{E}}_2} + \prob{\mathcal{E}_1\vee \mathcal{E}_2})\expect{T}\\
&=2t^* + \expect{T} -(1/2)\prob{\overline{\mathcal{E}}_1 \wedge \overline{\mathcal{E}}_2}\expect{T}. 
\end{align*}
Thus, $\expect{T} 
\le 4t^*/\prob{\overline{\mathcal{E}}_1\wedge \overline{\mathcal{E}}_2} 
= 4t^*\left(1+o(1)\right) = 4t^*$.
\end{proof}

By the result of Lemma~\ref{lem:expected-time}, 
the phase-based analysis that is exploited until there is 
a pair with no failure 
only leads to a multiplicative constant 
in the expectation. 
We need to calculate the value of $t^*$ that
will also asymptotically be the overall 
expected number of generations of the \pbil
on \los.  
We now give our runtime bound for the \pbil on \los 
with sufficiently large population $\lambda$.
The proof is very straightforward compared to that in \cite{bib:Wu2017}.
The floor and ceiling functions of 
$x\in \mathbb{R}$ are $\floor{x}$ and $\ceil{x}$, respectively.

\begin{theorem}\label{thm:pbil-on-los}
	The \pbil with margins and offspring population size
	$\lambda \ge c\log n$ for a sufficiently large constant $c>0$,
	parent population size $\mu =\gamma_0\lambda$ for any constant
    $\gamma_0$ satisfying 
    $\gamma_0 \le \eta^{\ceil{\xi}+1}/((1+\delta)e)$ 
    where $\xi = \ln (p_0)/(p_0-1)$ and $p_0:=\gamma_0/(1+\varepsilon)$  
    for any positive constants $\delta, ~\varepsilon$
    and smoothing parameter $\eta \in (0,1]$, 
	has expected optimisation time $\bigO{n\lambda\log \lambda +n^2}$
	on \los.
\end{theorem}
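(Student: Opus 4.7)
The plan is to apply the level-based theorem (Theorem~\ref{thm:levelbasedtheorem}) with the canonical partition $A_j := \{x \in \{0,1\}^n : \los(x) = j\}$ from~(\ref{canonical-partition}), verifying its three conditions (G1), (G2), (G3). By Lemma~\ref{lem:expected-time}, I may condition on the ``no-failure'' event that the \dkw-based guarantees of Lemmas~\ref{lem1} and~\ref{bound-on-prob-j-los} hold whenever invoked, at the cost of only a constant multiplicative factor in the final runtime. This also justifies treating the next-generation sampling distribution $\mathcal{D}(P^{(t)})$ -- which formally depends on both $P^{(t)}$ and the hidden state $p^{(t)}$ -- as effectively determined by $P^{(t)}$, since under no-failure $p^{(t)}$ is pinned by the empirical marginals up to $\bigO{1/\sqrt{\lambda}}$ error.

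For (G1): if $|P^{(t)} \cap A_{\geq j}| \geq \gamma_0\lambda$, all $\mu = \gamma_0\lambda$ fittest individuals carry $1$ in each of positions $1,\ldots,j$. The \pbil update therefore satisfies $p_i^{(t+1)} = (1-\eta) p_i^{(t)} + \eta \geq p_i^{(t)}$ for $i \in [j]$, while Lemma~\ref{bound-on-prob-j-los} applied with $I = [j]$ gives $\prod_{i=1}^j p_i^{(t)} \geq p_0$ with failure probability $\bigO{n^{-2c}}$. Combining with $p_{j+1}^{(t+1)} \geq 1/n$ (lower border), I obtain $\Pr(y \in A_{\geq j+1}) \geq p_0/n$, so I take $z_j := p_0/n$, and $z_* = \Omega(1/n)$.

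The central step is (G2). Suppose additionally $|P^{(t)} \cap A_{\geq j+1}| \geq \gamma\lambda$ for some $\gamma \in (0,\gamma_0]$. Since individuals in $A_{\geq j+1}$ strictly dominate those only in $A_j$, the top $\gamma\lambda$ individuals all lie in $A_{\geq j+1}$; consequently the top-$\mu$ empirical frequency at position $j+1$ is at least $\gamma/\gamma_0$, giving $p_{j+1}^{(t+1)} \geq \eta\gamma/\gamma_0$. For positions $i \in [j]$ I need a uniform lower bound of the form $\prod_{i=1}^j p_i^{(t+1)} \geq \eta^{\ceil{\xi}}/e$. The plan is to combine the pre-update estimate $\prod_{i=1}^j p_i^{(t)} \geq p_0$ with the update and the majorisation-preservation from Lemma~\ref{majorisation-after-update}: tracking the worst-case configuration of marginals (most ``concentrated'' in the majorisation order, which minimises the post-update product by Lemma~\ref{boland-bound}), the geometric recursion $p \mapsto (1-\eta)p + \eta$ converges towards $1 - 1/n$ at rate $(1-\eta)^k$, with $\xi = \ln(p_0)/(p_0-1)$ parameterising the number of updates needed to drive a single marginal from $p_0$ past a suitable threshold and $1/e \leq (1-1/n)^n$ capturing the margin. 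Multiplying,
\begin{align*}
\Pr(y \in A_{\geq j+1}) \geq \frac{\eta^{\ceil{\xi}}}{e} \cdot \frac{\eta\gamma}{\gamma_0} = \frac{\eta^{\ceil{\xi}+1}}{e\gamma_0}\,\gamma \geq (1+\delta)\gamma,
\end{align*}
by the hypothesis $\gamma_0 \leq \eta^{\ceil{\xi}+1}/((1+\delta)e)$.

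Condition (G3) is routine: with $m = n+1$, $z_* = \Omega(1/n)$, and $\gamma_0,\delta$ constant, the assumption $\lambda \geq c\log n$ suffices for a sufficiently large constant $c$. Substituting into the conclusion of Theorem~\ref{thm:levelbasedtheorem}, the term $\sum_j 1/z_j$ contributes $\bigO{n^2}$ and $\sum_j \lambda\ln(6\delta\lambda/(4 + z_j\delta\lambda))$ contributes $\bigO{n\lambda\log\lambda}$, so $\expect{T} = \bigO{n\lambda\log\lambda + n^2}$; the factor $4$ from Lemma~\ref{lem:expected-time} preserves the asymptotic bound. The main obstacle is (G2): while the \umda case ($\eta = 1$) gives $\prod_{i\leq j} p_i^{(t+1)} = (1-1/n)^j \geq 1/e$ immediately from the margins, for $\eta < 1$ the smoothing slows marginal growth and the uniform one-generation bound must be obtained via a careful majorisation argument across the recursion's transient phase quantified by $\xi$.
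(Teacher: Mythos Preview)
Your high-level strategy matches the paper exactly: canonical partition, verify (G1)--(G3) of the level-based theorem, use Lemma~\ref{bound-on-prob-j-los} for (G1), and handle the \dkw failure events via Lemma~\ref{lem:expected-time}. The derivation of $p_{j+1}^{(t+1)}\ge \eta\gamma/\gamma_0$ and the final arithmetic are also correct.

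The gap is in your account of (G2), specifically in how the bound $\prod_{i\le j} p_i^{(t+1)}\ge (1-1/n)^{j-\ceil{\xi}}\eta^{\ceil{\xi}}$ is obtained. You interpret $\xi$ \emph{temporally}, as ``the number of updates needed to drive a single marginal from $p_0$ past a suitable threshold'' via the recursion $p\mapsto(1-\eta)p+\eta$. But only \emph{one} update occurs between $p^{(t)}$ and $p^{(t+1)}$; there is no multi-step transient to track, and the quantity $\xi=\ln(p_0)/(p_0-1)$ does not arise as a convergence time of that recursion. In the paper, $\xi$ is \emph{spatial}: it bounds the number of \emph{positions} among $1,\ldots,j$ that can fail to sit at the upper border in the majorising vector.

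Concretely, the paper's argument for (G2) is: from Lemma~\ref{bound-on-prob-j-los} one has $\prod_{i\le j}p_i^{(t)}\ge p_0$, and the AM--GM inequality (which you do not mention but is essential) converts this into $\sum_{i\le j}p_i^{(t)}\ge j\,p_0^{1/j}$. One then builds the majorising vector $z^{(t)}$ with $m$ components at $1-1/n$ and the remaining $j-m$ at $p_0$, where $m=\lfloor(\sum p_i^{(t)}-jp_0)/(1-1/n-p_0)\rfloor$. Analysing the function $g(j)=j(p_0^{1/j}-1)/(1-p_0)$ shows it has horizontal asymptote $-\xi$, hence $m\ge j-\ceil{\xi}$. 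After the single update (Lemma~\ref{majorisation-after-update} preserves majorisation), the $m$ components stay at $1-1/n$ and the at most $\ceil{\xi}$ remaining ones become $\ge(1-\eta)p_0+\eta\ge\eta$; Lemma~\ref{boland-bound} then gives $\Pr(X_{1:j}^{(t+1)}=j)\ge(1-1/n)^{j-\ceil{\xi}}\eta^{\ceil{\xi}}$. Your sketch should be revised to reflect this single-step, position-counting argument rather than a multi-generation convergence.
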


\begin{proof} 
We strictly follow the procedure recommended in \cite{bib:Corus2016}.
	
\step{1} Recall that we use the canonical partition,
defined in (\ref{canonical-partition}), in which 
each subset $A_j$ contains individuals with exactly $j$ leading ones.
There are a total of $m=n+1$ levels ranging from $A_0$ to $A_n$.

\step{2} 
 Given  $|P^{(t)}\cap A_{\ge j}|\ge \gamma_0\lambda=\mu$ and 
 $|P^{(t)}\cap A_{\ge j+1}|\ge \gamma\lambda$, we prove that the 
 probability of sampling an offspring in $A_{\ge j+1}$ 
 in generation $t+1$ is lower bounded by $(1+\delta)\gamma$ 
 for some constant $\delta>0$.  
 
Lemma~\ref{boland-bound} asserts that if we can find a vector 
$z^{(t)}=(z_{1}^{(t)},\ldots,z_{j}^{(t)})$ that majorises 
$p_{1:j}^{(t)}$, then the probability of obtaining $j$ successes from a 
Poisson-binomial distribution with parameters $j$ and $p_{1:j}^{(t)}$
is lower bounded by the same distribution with parameters $j$ and $z^{(t)}$.
Following \cite{bib:Wu2017}, we compare $X_1^{(t)},\ldots, X_j^{(t)}$
	with another sequence of independent Bernoulli random variables 
	$Z_1^{(t)},\ldots,Z_j^{(t)}$ with success probabilities 
	$z_1^{(t)},\ldots,z_j^{(t)}$. 
    Note that $Z^{(t)}:=\sum_{k=1}^j Z_k^{(t)}$.
	Define $m:=\floor{(\sum_{i=1}^j p_i^{(t)}-jp_0)/(1-\frac{1}{n}-p_0)}$ 
    where $p_0:=\frac{\gamma_0}{1+\varepsilon}$,
	and let $Z_1^{(t)},\ldots,Z_m^{(t)}$ all 
	have success probability $z_1^{(t)}=\ldots=z_m^{(t)}=1-\frac{1}{n}$, 
	$Z_{m+2}^{(t)},\ldots,Z_j^{(t)}$ get $p_0$ and possibly a 
	random variable $Z_{m+1}^{(t)}$ takes intermediate value 
	$[p_0, 1-\frac{1}{n}]$ to guarantee  
    $\sum_{i=1}^j p_i^{(t)}=\sum_{i=1}^{j} z_i^{(t)}$.

    Since 
    $\sum_{i=1}^{j}p_i^{(t)}
    \ge j\cdot (\prod_{i=1}^j p_i^{(t)})^{1/j}
    \ge j\cdot p_0^{1/j}$ by 
    the Arithmetic Mean-Geometric Mean inequality 
    (see Lemma~\ref{lem:am-gm inequality} in
    the Appendix) and 
    Lemma~\ref{bound-on-prob-j-los}, 
    we get 
    $m\ge \floor{j(p_0^{1/j}-p_0)/\left(1-\frac{1}{n}-p_0\right)}$. 
	Let us consider the following function: 
    $$
    g(j) = j\cdot \frac{p_0^{1/j}-p_0}{1-p_0} -j 
    = j\cdot \frac{p_0^{1/j}-1}{1-p_0}.
    $$
    This function has 
    a horizontal asymptote at $y=-\xi$, 
    where $\xi:=\frac{\ln p_0}{p_0-1}$ 
    (see calculation in the Appendix). 
    Thus, $m\ge j-\ceil{\xi}$ for all $j\ge 0$.  
    
    Note that we have just performed all calculations 
on the current model in generation $t$. 
The \pbil then updates the 
current model $p^{(t)}$ to obtain $p^{(t+1)}$ using 
the component-wise formula 
$p_i^{(t+1)}=(1-\eta)p_i^{(t)}+\frac{\eta}{\mu}\sum_{k=1}^{\mu}x_i^{(k)}$. 
For all $i\in [j]$, we know 
that $\sum_{k=1}^{\mu}x_i^{(k)}=\mu$ due to the assumption of condition (G2). 
After the model is updated, we obtain
\begin{itemize}
\item $z_i^{(t+1)} = 1-\frac{1}{n}$ 
for all $i\le j-\ceil{\xi}$,
\item $z_i^{(t+1)} \ge (1-\eta)~p_0+\eta\ge \eta$ for all $j-\ceil{\xi} < i\le j$, and
\item $p_{j+1}^{(t+1)}\ge (1-\eta)~p_{j+1}^{(t)}+\eta\frac{\gamma}{\gamma_0}
\ge \eta\frac{\gamma}{\gamma_0}$ 
due to $\sum_{k=1}^{\mu}x_{j+1}^{(k)}=\gamma\lambda$.
\end{itemize}
Let us denote $z_i^{(t+1)}= 
(1-\eta)z_i^{(t)}+\eta$. 
Lemmas~\ref{boland-bound} and 
\ref{majorisation-after-update} assert that 
$z^{(t+1)}$  majorises 
$p_{i:j}^{(t+1)}$, and 
$\Pr(X_{1:j}^{(t+1)}=j)\ge \Pr(Z^{(t+1)}=j)$. 
In words, the probability of 
sampling an offspring in 
$A_{\ge j}$ in generation $t+1$ is lower bounded 
by the probability of obtaining $j$ successes from a
Poisson-binomial distribution 
with parameters $j$ and $z^{(t+1)}$. More precisely, at generation $t+1$,
\begin{multline*}
\Pr(X_{1:j+1}^{(t+1)}=j+1)
\ge \Pr(X_{1:j}^{(t+1)}=j) \cdot \Pr(X_{j+1}^{(t+1)}=1)\\
\ge \Pr(Z^{(t+1)}=j)\cdot p_{j+1}^{(t+1)}\\
\ge (1-1/n)^{j-\ceil{\xi}} \eta^{\ceil{\xi}+1}\gamma/\gamma_0
\ge (1+\delta)\gamma,
\end{multline*}
where $\left(1-\frac{1}{n}\right)^{j-\ceil{\xi}} \ge \frac{1}{e}$ 
and $\gamma_0\le \frac{\eta^{\ceil{\xi}+1}}{e(1+\delta)}$ 
for any constant $\delta >0$. Thus, condition (G2) of 
Theorem~\ref{thm:levelbasedtheorem}
is verified.


\step{3} Given that $|P^{(t)}\cap A_{\ge j}|\ge \gamma_0\lambda$, we aim at showing
that the probability of sampling an offspring in $A_{\ge j+1}$
in generation $t+1$
is at least $z_j$. Note in particular that
Lemma~\ref{bound-on-prob-j-los} yields  
$\Pr(X_{1:j}^{(t+1)}=j)\ge \frac{\gamma_0}{1+\varepsilon}$. 
The probability
of sampling an offspring in $A_{\ge j+1}$ in generation $t+1$ 
is lower bounded by
\begin{displaymath}
	\Pr(X_{1:j}^{(t+1)}=j)\cdot \Pr(X_{j+1}^{(t+1)}=1)
	\ge \frac{\gamma_0}{1+\varepsilon}\cdot \frac{1}{n}
	=: z_j.
\end{displaymath}
where $\Pr(X_{j+1}^{(t+1)}=1)=p_{j+1}^{(t+1)}\ge \frac{1}{n}$. 
Therefore, condition (G1) of Theorem~\ref{thm:levelbasedtheorem}
is satisfied with $z_j=z_*= \frac{\gamma_0}{(1+\varepsilon)n}$.

\step{4} 
Condition (G3) of Theorem~\ref{thm:levelbasedtheorem}
requires a population size 
$\lambda=\bigOmega{\log n}$. This bound matches with the condition 
on $\lambda\ge c\log n$ for some sufficiently 
large constant $c>0$ from the  previous lemmas. 
Overall, $\lambda = \bigOmega{\log n}$.

\step{5} 
When $z_j=\frac{\gamma_0}{(1+\varepsilon)n}$ where 
$\gamma_0 \le \frac{\eta^{\ceil{\xi}+1}}{(1+\delta)e}$ and 
$\lambda \ge c\log n$
for some constants $\varepsilon >0$, 
$\eta\in (0,1]$ and sufficiently large $c>0$,
all conditions of Theorem~\ref{thm:levelbasedtheorem} are verified. 
Using that $\ln\left(\frac{6\delta \lambda}{4+\delta \lambda z_j}\right)
<\ln \left(\frac{3\delta\lambda}{2}\right)$
an upper bound on the expected optimisation time of the \pbil on \los  
is guaranteed as follows.
\begin{multline*}
\left(\frac{8}{\delta^2}\right)\sum_{j=0}^{n-1}
\left[\lambda\ln\left(\frac{3\delta\lambda}{2}\right)+\frac{1}{z_j}\right]
< \frac{8}{\delta^2} n\lambda\log \lambda + 
\frac{8(1+\varepsilon)}{\delta^2\gamma_0}n^2 + o\left(n^2\right)
\in  \bigO{n\lambda\log \lambda + n^2}.
\end{multline*}
Hence, the expected number of generations 
$t^*$ is $\bigO{n\log \lambda+\frac{n^2}{\lambda}}$ 
for a sufficiently large $\lambda$ 
in the case of no
failure and, thus, meets the 
assumption in Lemma~\ref{lem:expected-time}.
The expected optimisation time of the \pbil 
on \los is still asymptotically 
$\bigO{n\lambda\log \lambda +n^2}$.
This completes the proof.
\end{proof}

Our improved upper bound of $\bigO{n^2}$ on the optimisation time of
the \pbil with population size $\lambda = \bigTheta{\log n}$ on \los
is significantly better than the previous bound 
$\bigO{n^{2+\varepsilon}}$ from \cite{bib:Wu2017}.  
Our result is not only stronger, but the proof is much simpler
as most of the complexities of the population dynamics of the
algorithm is handled by Theorem~\ref{thm:levelbasedtheorem}
\cite{bib:Corus2016}. Furthermore, we also
provide specific values for the 
multiplicative constants, i.e. $\frac{32}{\delta^2}$ and 
$\frac{32(1+\varepsilon)}{\delta^2\gamma_0}$ for the terms 
$n\lambda\log \lambda$ and $n^2$, respectively 
(see Step 5 in Theorem~\ref{thm:pbil-on-los}).  
Moreover, 
the result also matches the runtime bound of 
the \umda on \los for 
a small population $\lambda=\bigTheta{\log n}$ 
\cite{Dang:2015:SRA:2739480.2754814}.

Note that Theorem~\ref{thm:pbil-on-los} requires some condition
on the selective pressure, that is
$\gamma_0 \le \frac{\eta^{\ceil{\xi}+1}}{(1+\delta)e}$ 
where $\xi = \frac{\ln p_0}{p_0-1}$ 
and $p_0:=\frac{\gamma_0}{1+\varepsilon}$  
for any positive constants $\delta, ~\varepsilon$
and smoothing parameter $\eta \in (0,1]$. Although for 
practical applications, we have to address
these constraints to find a suitable set of values for $\gamma_0$, 
this result here tells us that there exists some settings 
for the \pbil such that it can optimise \los 
within $\bigO{n\lambda\log \lambda+n^2}$ time in expectation.

\section{Runtime Analysis of the \pbil on \bval}
\label{sec:pbil-binval}
We first partition the search space into non-empty disjoint subsets $A_0,\ldots,A_n$.

\begin{lemma}\label{bval-los-similarity}
	Let us define the levels as  
	$$
	A_j :=\bigg\{ x\in \{0,1\}^n\bigg| \sum_{i=1}^j 2^{n-i} \le \bval(x) < \sum_{i=1}^{j+1} 2^{n-i}\bigg\},
	$$
	for $j \in [n]\cup\{0\}$, where $\sum_{i=1}^{0}2^{n-i}=0$.
	If a bitstring $x$ has exactly $j$ leading ones, i.e.
    $\los(x)=j$, then $x\in A_j$. 
\end{lemma}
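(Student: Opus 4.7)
My plan is to prove this by a direct calculation, splitting into the cases $j<n$ and $j=n$, since the definition of $A_n$ involves an empty‐looking upper bound $\sum_{i=1}^{n+1} 2^{n-i}$ that needs a moment of care.

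First, assume $\los(x) = j$ with $j < n$. By definition of \los, this means $x_1 = x_2 = \cdots = x_j = 1$ and $x_{j+1} = 0$, while $x_{j+2}, \ldots, x_n$ are arbitrary. Starting from the definition $\bval(x) = \sum_{i=1}^n 2^{n-i} x_i$, I would substitute these values to get
$$
\bval(x) = \sum_{i=1}^{j} 2^{n-i} + 0 \cdot 2^{n-j-1} + \sum_{i=j+2}^{n} 2^{n-i} x_i.
$$
The lower bound $\bval(x) \ge \sum_{i=1}^j 2^{n-i}$ is immediate since the remaining sum is nonnegative. For the upper bound, I would use the geometric identity $\sum_{i=j+2}^n 2^{n-i} = 2^{n-j-1} - 1 < 2^{n-j-1}$, which yields
$$
\bval(x) < \sum_{i=1}^{j} 2^{n-i} + 2^{n-j-1} = \sum_{i=1}^{j+1} 2^{n-i},
$$
so $x \in A_j$.

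Next, in the boundary case $\los(x) = n$, the bitstring $x$ is all-ones and $\bval(x) = 2^n - 1 = \sum_{i=1}^n 2^{n-i}$, which certainly satisfies the lower bound for $A_n$ with equality. For the upper bound, one checks that $\sum_{i=1}^{n+1} 2^{n-i} = (2^n - 1) + 1/2$, which strictly exceeds $2^n - 1$, so $x \in A_n$ as well.

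I do not expect a real obstacle here: the only subtlety is treating the $j=n$ case, where the upper threshold $\sum_{i=1}^{n+1} 2^{n-i}$ is non-integer but the inequality is still strict. The core step is just the telescoping identity $\sum_{i=j+2}^n 2^{n-i} = 2^{n-j-1}-1$, which guarantees that flipping all of the ``low'' bits $x_{j+2},\ldots,x_n$ to $1$ still leaves $\bval(x)$ strictly below $\sum_{i=1}^{j+1} 2^{n-i}$, thereby ensuring that the level sets $A_j$ really do contain every bitstring with $\los$-value exactly $j$.
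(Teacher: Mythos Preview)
Your proof is correct and follows essentially the same direct computation as the paper: decompose $\bval(x)$ into the contribution of the $j$ leading ones, the zero at position $j+1$, and the remaining bits, then use the geometric identity $\sum_{i=j+2}^n 2^{n-i}=2^{n-j-1}-1$ to obtain the strict upper bound. You are in fact slightly more careful than the paper, which writes the generic form $x=1^j0\{0,1\}^{n-j-1}$ and thereby tacitly omits the boundary case $j=n$; your explicit treatment of that case closes this small gap.
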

\begin{proof}
	Consider a bitstring $x=1^j0\{0,1\}^{n-j-1}$. 
	The fitness  contribution of the first $j$ leading ones to $\bval(x)$ is
	$\sum_{i=1}^{j}2^{n-i}$. The $(j+1)$-th 
	bit has no contribution, while that of the last
	$n-j-1$ bits ranges from zero to 
	$\sum_{i=j+2}^{n}2^{n-i}=\sum_{i=0}^{n-j-2}2^{i} = 2^{n-j-1}-1$.
	So overall,
$	\sum_{i=1}^{j}2^{n-i} \le \bval(x) \le \sum_{i=1}^{j+1}2^{n-i}-1 < \sum_{i=1}^{j+1}2^{n-i}.
$
Hence, the bitstring $x$ belongs to level $A_j$. 
\end{proof}

In both problems, all that matters to determine 
the level of a  bitstring is the position of the first 0-bit 
when scanning from the most significant to the least significant bits.
Now consider two bitstrings in the same level for \bval, 
their rankings after the population is sorted are also determined 
by some other less significant bits; however, the proof of 
Theorem~\ref{thm:pbil-on-los} never takes these bits into account.
Thus, the following corollary yields the first 
upper bound on the expected optimisation time of the \pbil 
and the \umda (when $\eta=1$) for \bval. 

\begin{corollary}\label{thm:pbil-on-bval}
The \pbil with margins and offspring population size
	$\lambda \ge c\log n$ for a sufficiently large constant $c>0$,
	parent population size $\mu =\gamma_0\lambda$ for any constant
    $\gamma_0$ satisfying 
    $\gamma_0 \le \eta^{\ceil{\xi}+1}/((1+\delta)e)$ 
    where $\xi = \ln (p_0)/(p_0-1)$ and $p_0:=\gamma_0/(1+\varepsilon)$  
    for any positive constants $\delta, ~\varepsilon$
    and smoothing parameter $\eta \in (0,1]$, 
	has expected optimisation time $\bigO{n\lambda\log \lambda +n^2}$
	on \bval.
\end{corollary}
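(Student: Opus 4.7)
The plan is to apply the level-based theorem to the partition $(A_j)_{j=0}^{n}$ defined in Lemma~\ref{bval-los-similarity}, and to observe that this partition coincides (as a set-theoretic partition of $\sspace$) with the canonical LeadingOnes partition~\eqref{canonical-partition}. Indeed, a bitstring $x$ lies in $A_j$ for \bval iff $x_1=\ldots=x_j=1$ and $x_{j+1}=0$ (or $j=n$), which is precisely the condition $\los(x)=j$. Consequently $A_{\ge j}=\{x:\los(x)\ge j\}$, and the preparatory Lemmas~\ref{lem1} and \ref{bound-on-prob-j-los} apply verbatim to the hypothesis $|P^{(t)}\cap A_{\ge j}|\ge \gamma_0\lambda$.

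Next I would show that selection by \bval agrees with selection by \los on the only information the proof of Theorem~\ref{thm:pbil-on-los} consults. Because one extra leading one contributes $2^{n-j-1}$ to \bval while all subsequent bits together contribute at most $2^{n-j-1}-1$, any bitstring with more leading ones strictly outranks any bitstring with fewer leading ones. Hence if $|P^{(t)}\cap A_{\ge j}|\ge \mu$ and $|P^{(t)}\cap A_{\ge j+1}|\ge \gamma\lambda$ for some $\gamma\le\gamma_0$, then every individual with at least $j+1$ leading ones is selected, so $\sum_{k=1}^{\mu} x_{j+1}^{(k)}\ge \gamma\lambda$; and every individual in the top $\mu$ has its first $j$ bits set to one, so $\sum_{k=1}^{\mu} x_i^{(k)}=\mu$ for all $i\in[j]$. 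These are the only two facts about the selection step used in Step~2 of the proof of Theorem~\ref{thm:pbil-on-los}.

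With these identities in place, Steps~2--5 of Theorem~\ref{thm:pbil-on-los} transfer verbatim: the majorisation construction of $z^{(t+1)}$, the applications of Lemmas~\ref{boland-bound} and \ref{majorisation-after-update}, the verification of (G1) with $z_j=\gamma_0/((1+\varepsilon)n)$ and of (G2) with factor $(1+\delta)\gamma$, the population-size requirement $\lambda=\bigOmega{\log n}$ from (G3), and the phase-based amplification in Lemma~\ref{lem:expected-time} (which itself depends only on the counts in $A_{\ge j}$). Summing the resulting per-generation bound over the $n$ levels yields the same asymptotic bound $\bigO{n\lambda\log\lambda+n^2}$.

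The main potential obstacle is that two individuals in the same \bval-level are tie-broken by their less significant bits, which could in principle bias the marginals at positions $i>j+1$ of the selected individuals in a way that differs from the \los case. However, the proof of Theorem~\ref{thm:pbil-on-los} never queries $p_i^{(t)}$ for $i>j+1$ when lower-bounding the probability of sampling an individual in $A_{\ge j+1}$, so in-level tie-breaking on less significant bits simply does not enter the argument, and the corollary follows.
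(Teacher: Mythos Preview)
Your proposal is correct and follows essentially the same approach as the paper: the paper also observes that the \bval partition of Lemma~\ref{bval-los-similarity} coincides with the canonical \los partition, that the ranking between levels is identical, and that the proof of Theorem~\ref{thm:pbil-on-los} never inspects bits beyond position $j{+}1$, so in-level tie-breaking by less significant bits is irrelevant. Your write-up is somewhat more explicit than the paper's one-paragraph justification (in particular, you spell out why $\sum_{k=1}^{\mu} x_i^{(k)}=\mu$ for $i\in[j]$ and $\sum_{k=1}^{\mu} x_{j+1}^{(k)}\ge\gamma\lambda$ hold under \bval-sorting), but the argument is the same.
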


\section{Conclusions}
\label{sec:conclusion}
Runtime analyses of \edas are scarce. Motivated by this, we have derived 
an upper bound of $\bigO{n\lambda\log \lambda + n^2}$ on
the expected optimisation time 
of the \pbil on both \los and \bval
for a population size $\lambda=\bigOmega{\log n}$. 
The result improves upon the previously best-known bound 
$\bigO{n^{2+\varepsilon}}$ from \cite{bib:Wu2017}, and requires 
a much smaller population size $\lambda=\bigOmega{\log n}$, and
uses relatively straightforward arguments. 
We also presents the first upper 
bound on the expected optimisation time 
of the \pbil on \bval. 

Furthermore, our analysis demonstrates that the level-based theorem 
can yield runtime bounds for \edas 
whose models are updated using information gathered from
the current and previous generations. An additional aspect of our analysis 
is the use of the \dkw inequality to bound the true distribution by 
the empirical population sample when the number of samples is large enough. 
We expect these arguments will lead to new results in runtime 
analysis of evolutionary algorithms.

\section*{Appendix}
\begin{lemma}[\textsc{AM-GM Inequality} \cite{Steele:2004}]
\label{lem:am-gm inequality}
Let $x_1,\ldots,x_n$ be $n$ non-negative real numbers. It always holds that
$$\frac{x_1+x_2+\cdots+x_n}{n}  \ge \left(x_1\cdot x_2\cdots x_n\right)^{1/n},$$
and equality holds 
if and only if $x_1=x_2=\cdots=x_n$.
\end{lemma}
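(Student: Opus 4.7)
The plan is to prove the AM-GM inequality using Jensen's inequality applied to the concavity of the natural logarithm, which gives the shortest and cleanest route. First I would dispose of the degenerate case: if some $x_i = 0$, then the right-hand side $(x_1 \cdots x_n)^{1/n}$ equals $0$, while the arithmetic mean on the left is nonnegative; equality then holds exactly when every $x_j$ is $0$. This reduces the problem to the generic case $x_1, \ldots, x_n > 0$.

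For the generic case, I would invoke the fact that $\ln : (0,\infty) \to \mathbb{R}$ is strictly concave, since $\ln''(x) = -1/x^2 < 0$. Jensen's inequality applied with uniform weights $1/n$ then yields
\begin{equation*}
\ln\left(\frac{x_1 + x_2 + \cdots + x_n}{n}\right) \;\ge\; \frac{1}{n}\sum_{i=1}^{n} \ln(x_i) \;=\; \ln\left((x_1 x_2 \cdots x_n)^{1/n}\right).
\end{equation*}
Exponentiating both sides and using strict monotonicity of $\exp$ gives the stated inequality. The equality clause follows from strict concavity of $\ln$: Jensen's inequality is tight if and only if the arguments $x_1, \ldots, x_n$ are all equal.

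An alternative approach, should one wish to avoid invoking Jensen's inequality, is Cauchy's forward-backward induction. The base case $n=2$ reduces to $(\sqrt{x_1} - \sqrt{x_2})^2 \ge 0$. The forward step shows the inequality for $n = 2^k$ by grouping $2n$ numbers into $n$ pairs and applying the base case within each pair together with the induction hypothesis applied to the $n$ pair-averages. The backward step descends from $n$ to $n-1$ by setting the ``extra'' argument equal to $A := (x_1 + \cdots + x_{n-1})/(n-1)$ and simplifying algebraically. Together these cover all integers $n \ge 1$.

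I expect no genuine obstacle, since this is a classical result; the main thing to track carefully is the equality case, which is clean under the Jensen approach (strict concavity of $\ln$) but slightly more delicate under Cauchy's induction, where one must verify that each inductive step preserves the ``all equal'' characterisation.
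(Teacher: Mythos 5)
Your proof is correct, but there is nothing in the paper to compare it against: the lemma is stated in the appendix as a classical result cited from Steele's \emph{Cauchy--Schwarz Master Class} and is used as a black box in Step~2 of the proof of Theorem~\ref{thm:pbil-on-los}, with no proof supplied. Your Jensen-based argument is the standard one and is complete: the reduction to the strictly positive case is handled properly (if some $x_i=0$ the geometric mean vanishes, and equality then forces every $x_j=0$, which is consistent with the ``all equal'' clause), and strict concavity of $\ln$ gives the equality characterisation cleanly. The Cauchy forward--backward induction you sketch as an alternative is also valid, though, as you note, tracking the equality case through both the forward and backward steps is more delicate there. Either route would serve if the paper wished to be self-contained, but the citation is entirely adequate for a result of this vintage.
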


\begin{proof}[\textsc{Proof of horizontal asymptote}]
The function can be rewritten as 
$g(j)=\frac{1}{1-p_0}\cdot \frac{p_0^{1/j}-1}{1/j}$. 
Denote $t:=1/j$, we obtain 
$g(t)=\frac{1}{1-p_0}\cdot \frac{p_0^t-1}{t}$. 
Applying L'H{\^{o}}pital's rule yields:
$$
\lim_{j\to +\infty} g(j) = \lim_{t\to 0^{+}} g(t) 
= \frac{ \lim_{t\to 0^{+}} \left(p_0^t \ln p_0\right)}{1-p_0} 
=\frac{\ln p_0}{1-p_0}=-\frac{\ln p_0}{p_0-1}.
$$
\end{proof}

\end{document}